\newtheorem{theorem}{Theorem}
\newtheorem{lemma}{Lemma}
\newtheorem{remark}{Remark}
\newtheorem{definition}{Definition}
\newtheorem{assumption}{Assumption}
\title{Recovering the Graph Underlying Networked Dynamical Systems \\ under Partial Observability: A Deep Learning Approach}
\author{
    Sérgio Machado\textsuperscript{\rm 2,}\textsuperscript{\rm 5}, Anirudh Sridhar\textsuperscript{\rm 3}, Paulo Gil\textsuperscript{\rm 2,}\textsuperscript{\rm 4} , Jorge Henriques\textsuperscript{\rm 2}\\José M. F. Moura\textsuperscript{\rm 5}, Augusto Santos\textsuperscript{\rm 1,}\textsuperscript{\rm 2}\footnote{Corresponding author. E-mail: augusto.pt@gmail.com.}
}
\begin{document}

\maketitle

\begin{abstract}
We study the problem of graph structure identification, i.e., of recovering the graph of dependencies among time series. We model these time series data as components of the state of linear stochastic \emph{networked} dynamical systems. We assume partial observability, where the state evolution of only a subset of nodes comprising the network is observed. We propose a new feature-based paradigm: to each pair of nodes, we compute a feature vector from the observed time series. We prove that these features are linearly separable, i.e., there exists a hyperplane that separates the cluster of features associated with connected pairs of nodes from those of disconnected pairs. This renders the features amenable to train a variety of classifiers to perform causal inference. In particular, we use these features to train Convolutional Neural Networks (CNNs). The resulting causal inference mechanism outperforms state-of-the-art counterparts w.r.t. sample-complexity. The trained CNNs generalize well over structurally distinct networks (dense or sparse) and noise-level profiles. Remarkably, they also generalize well to real-world networks while trained over a synthetic network -- namely, a particular realization of a random graph. 

\end{abstract}

\section{Introduction}

\emph{Networked} dynamical systems are characterized by a set of interconnected nodes or agents. The state of the nodes evolves over time according to their peer-to-peer interactions constrained by a support network of contacts~\cite{Barrat,liggett,Queues,porterdynamical}. More concretely, the state of a node $i$ is only \emph{immediately} affected by the state of nodes that directly link to $i$, i.e., nodes that bear a direct causal effect on node $i$. This causal network is captured by a graph, which is often a latent structure underlying these systems.

Examples of networked dynamical systems include: $i$)~\textit{Pandemics} -- the fraction of infections within each community of individuals is captured by a time series that is strongly influenced by contacts in neighboring communities. Knowledge of the contact network (which determines the main avenues of contagion) is critical for designing effective mitigation measures \cite{singlevirus,topoepidemics,augusto_qualitative,Net_dismantling,Net_dismantling2}. For example, a natural mitigation policy is \emph{network dismantling}: aiming to quarantine a minimal set of nodes to promote a maximal disconnect of the underlying contagion network~\cite{Net_dismantling,Net_dismantling2} -- thus, hindering virus propagation across communities without disrupting the global function of the networked system; $ii$)~\textit{Brain activity} -- based on temporal signals gathered from cranial probes, an important task is to infer the so-called \emph{Functional Connectivity Matrix}, which represents the graph of interactions among the active regions of the brain (see, e.g.,~\cite{brainaugusto}). Recent evidence shows that the Functional Connectivity Matrix can be used to diagnose or predict the onset of motor activities or cognitive disorders~\cite{epi_pred,Brain3,Brain2,alzheimer,parkinson,net_epi,Net_body}; $iii$)~\textit{Finance} -- the dynamics of stock prices can be influenced by interactions between firms, and knowledge of this interaction network can inform government interventions, for instance \cite{finance1,finance2,finance3}.

In most practical instances of the examples above, the node-level time series are readily accessible, but the underlying causal network -- which is of fundamental importance in downstream tasks -- is fully or partially unknown. To address this issue, a growing body of literature has developed methods for reconstructing the network from the observed node-level time series~\cite{R1minusR3,NIPS_Bento,MaterassiSalapakaCDC2015,tomo_journal_proceedings}. In this work, we focus on \emph{linear stochastic networked dynamical systems}, which is arguably one of the most natural settings for network identification from time series since a great class of nonlinear networked dynamical systems can be addressed via linearization about the equilibria under small-noise regimes~\cite{Ching2017ReconstructingLI,Napoletani} or via appropriate embedding in higher dimensional spaces~\cite{nonlinearLim2015,Koopman_Gon}. Moreover, since it is typically impractical to monitor \emph{all} node-level signals in large-scale systems, we assume a \emph{partial observability} setting, wherein we observe the time series corresponding to a small subset of nodes and aim to reconstruct the corresponding subgraph connecting them using a small number of samples. This task is much more challenging than the \emph{full observability} case, since the time series of the observed nodes are also affected by the unobserved dynamics of the remainder of the network. Fig.~\ref{fig:tomography} summarizes the structure identification framework considered.

This work departs from the standard approach of reconstructing networks based on scalar measures between time series -- i.e., measures that assign a real-value to the coupling-strength between nodes, e.g., correlation, Granger, Precision matrix, etc. -- which dates back to~\cite{ChowLiu}. We propose a novel \emph{feature vector based setting} constructed for each pair of nodes from their time series. In this novel setting, structure identification leverages the separability properties of the proposed feature vectors in a higher-dimensional space. We provide rigorous theoretical results proving that our features are \emph{linearly separable} for \emph{any} undirected network, once sufficiently many samples are taken. This provides the explainability of the method: our features can be readily used as input to a variety of machine learning pipelines to perform causal inference. We demonstrate that CNNs trained with our features outperform state-of-the-art methods in terms of accuracy and sample complexity.

\begin{figure} [!t]
	\begin{center}
		\includegraphics[scale= 0.38]{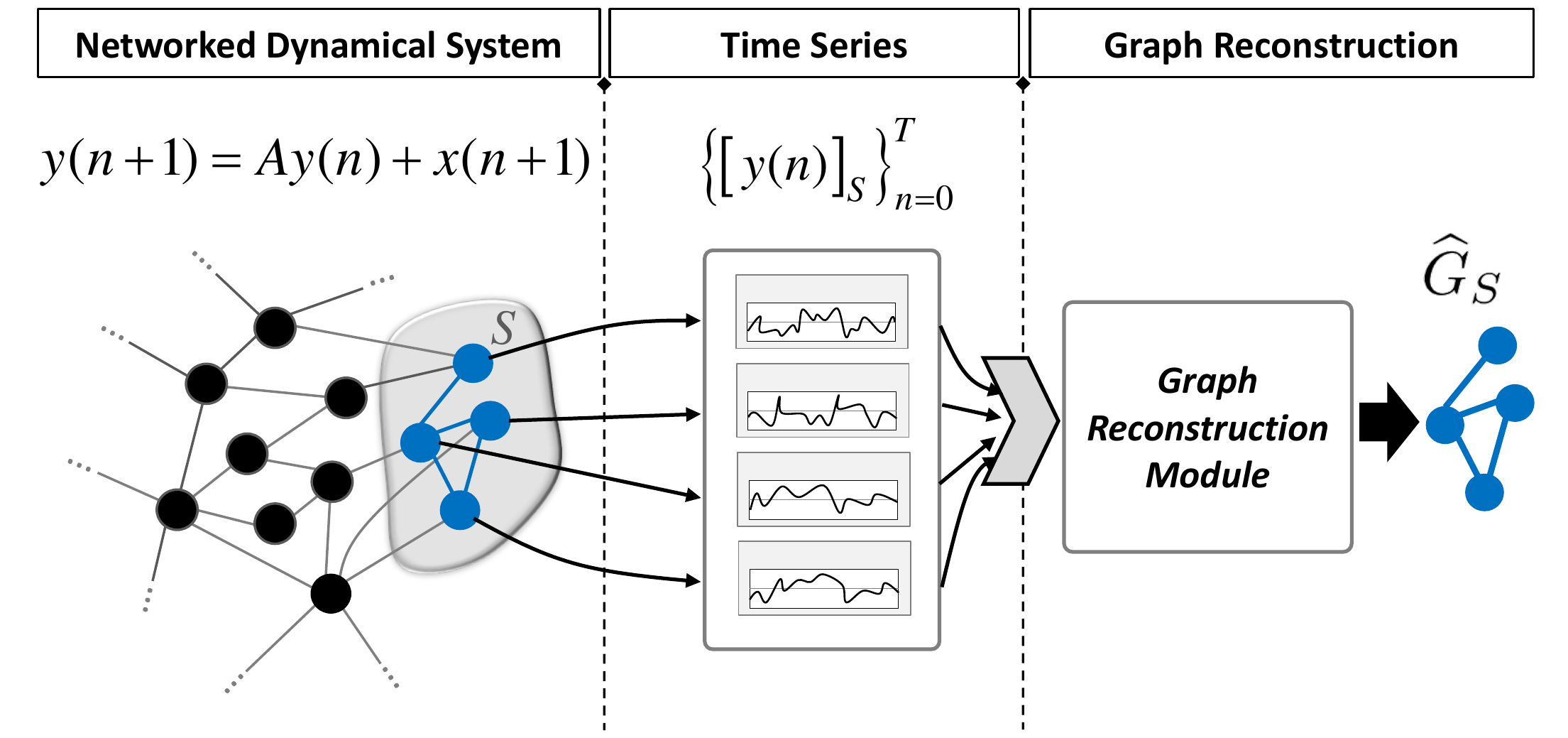}
		\caption{Structure identification under partial observability.}
		\label{fig:tomography}
	\end{center}
\end{figure}

\section{Related Work}

Causal inference relies on the nature of the samples, and it depends on whether the observed time series are drawn independently from multivariate distributions (often assumed i.i.d. within the scope of graphical models), or are time series stemming from some networked dynamical law (not i.i.d.). For the multivariate case, the Markov property establishes a one-to-one correspondence between certain probability distributions and the set of (possibly directed) graphs~\cite{Hammersley,pearl_2009}. If $X$ is conditionally independent of $Z$ given $Y$, then there is no arrow, or direct causal effect, from $Z$ to $X$. The dependence relationships are thus captured by a graph. This Markov property can be extended to discrete-time networked dynamical systems: the state of a node at instant $n+1$ depends only on the state of some nodes at time $n$ (also known as neighbors or parents). The general goal of causal inference is to uncover the possible \emph{avenues of information flow}: to recover from observation of the time series samples the underlying graph structure of dependencies defined by the Markov property. Typically, this is done by leveraging various forms of scalar measures between time series, e.g., transformations of the covariance matrix; regression (e.g., Granger estimator)~\cite{Geigeretal15,NIPS_Bento,tomo_journal_proceedings}; or other scalar graph metrics~\cite{MaterassiSalapakaCDC2015}. The performance of the precise method ties strongly to the data generative process and to whether the system is fully or partially observed.

\subsection{Full-Observability}

\textbf{Graphical models.} Classical algorithms (assuming i.i.d. samples) based on conditional independence tests, include the SGS~\cite{Spirtes2000}, PC~\cite{spirtes91}, GES~\cite{David_pc}, and FGS~\cite{Ramsey2016AMV}. The algorithms and sufficient conditions for consistency devised in these works rely on sparsity related structural constraints that hardly fit the connectivity pattern of general networks. The work~\cite{AnandkumarWalkSummabilityJMLR} offers an approach for the large scale setting under certain complying assumptions of sparsity. The independence tests are leveraged via conditional covariance tests. All structural constraints, revolving around sparsity, play a critical role to render the scaling of the independence tests amenable to computation, otherwise, the problem becomes quickly unfeasible~\cite{Breslerhardness,complexity_markov}.

\textbf{Networked dynamical systems.} For approaches in the signal processing literature,~\cite{Mateos} provides an overview highlighting regression plus regularization of the network sparsity methods for full-observability over distinct models -- primarily promoting sparsity of the latent network, -- including linear dynamical systems as vector autoregressive (VAR) models, as in~\cite{mei,Moneta,TAC_graph_learning}. In this regard,~\cite{NIPS_Bento} addresses the problem for linear stochastic differential equations (SDEs) via an optimization formulation that regularizes sparsity of the latent network. This problem is addressed by first converting the continuous-time SDE into a discrete-time linear dynamical system -- a technique that yields the discrete-time model considered in this work. Other schemes exploit spectral-based methods~\cite{Granger_paper, Santiago,Santiago_template}. These leverage the spectral properties of the interaction matrix or support graph~\cite{Aliaksei} to characterize signatures that allow consistent estimation over certain sparse networks.

\subsection{Partial-Observability}

\textbf{Graphical models.} In general, the proposed approaches rely on conditional independence (CI) tests or measures thereof, e.g., conditional mutual information (CMI) or transfer entropy, and a causal link is declared whenever a test yields a positive CMI-based metric. Classical algorithms for causal inference under the presence of latent variables are the FCI~\cite{spirtes_latent_95} and RFCI~\cite{Colombo_latent}. As in the full-observability setting, consistent tests scale combinatorially with the connectivity of the causal graph, rendering the CI-based approaches impractical for denser graphs. To control the curse of connectivity, CI-based methods often act at a microscopical level relying on several strong structural constraints including, directed acyclic graphs, long girth~\cite{anandkumar13,AnandkumarValluvanAOS} and other more technical local structural conditions, such as
bottleneck and non-redundancy~\cite{Causal_part_Nips,pmlr_causal}.

\textbf{Networked dynamical systems.} In~\cite{MaterassiSalapakaCDC2015,MaterassiSalapakaCDC2012,MaterassiSalapakaTAC2012}, linear dynamical systems are addressed via certain pseudo-metrics, e.g. log-coherence distance, aiming to capture the true graph-distance between nodes. 
In~\cite{Geigeretal15} some conditions on the network connectivity and interaction matrix of a linear networked dynamical system are proposed, in order to obtain uniqueness of the network connectivity given partially observed samples. It does not provide, however, an algorithm with consistency guarantees to retrieve the uniquely determined network. On the other hand, the work~\cite{POIM} uses an expectation-maximization based approach to address certain discrete-time discrete state-space networked dynamical systems, while~\cite{ChandrasekaranParriloWillsky, Jalali, Mei_latent} resort to convex optimization based methods for regularizing the sparsity of the network under partial observability. The works~\cite{tomo_journal,tomo_journal_proceedings,open_Journal} establish structural consistency of the Granger (or regression) and other matrix-valued estimators over partially observed discrete-time linear stochastic networked dynamical systems with symmetric interaction matrices, for distinct regimes of network connectivity (including densely connected networks). Similar to~\cite{AnandkumarValluvanAOS}, the structural consistency of these estimators is established in the \emph{thermodynamic limit}, i.e., as the number of nodes scales to infinite, which fits the framework of large-scale networks.
Recently,~\cite{R1minusR3} proved that the underlying interaction matrix, up to a multiplicative constant related to the noise level, can be expressed as a linear combination of covariance matrices, with high probability, under the following regime: $i$) the interaction matrix $A$ is symmetric; $ii$) the noise $\mathbf{x}$ is \emph{diagonal} and homogeneous, i.e., its covariance matrix is a multiple of the identity matrix. Theorem~$1$ in~\cite{R1minusR3} will be used in the present work to establish an important result regarding the proposed set of feature vectors, namely, consistent linear separability. This property will further yield a competitive performance for the trained CNNs in terms of sample-complexity.

\section{Problem Formulation}
\label{sec:probform}

We consider the linear networked dynamical law
\begin{equation}\label{eq:model}
\mathbf{y}(n+1)=A\mathbf{y}(n)+\mathbf{x}(n+1),
\end{equation}
where~$\mathbf{y}(n)=\left[y_1(n)\,\,y_2(n)\,\,\ldots\,\,y_N(n)\right]^{\top}\in\mathbb{R}^N$ represents the state-vector of the $N$-dimensional networked dynamical system at time $n$ that collects the states~$y_i(n)$ of each node $i$ at time $n$; $\mathbf{x}(n)\sim\mathcal{N}\left(0,\sigma^2 I_N\right)$ represents the excitation noise associated with the $N$ nodes of the system with covariance matrix~$\sigma^2 I_N$, and independent across time~$n$; $A\in \mathbb{R}_{+}^{N\times N}$ refers to the non-negative interaction matrix whose support represents the underlying graph linking the nodes. The dynamical system is assumed to be stable, i.e., $\rho(A)<1$, where $\rho(A)$ stands for the spectral radius of $A$.

This work deals with the problem of recovering the support of the submatrix $A_{S}$, i.e., the graph structure of connections among the observed nodes in the subset $S$ from observation of the subvector $\left[\mathbf{y}(n)\right]_{S}=\left[\mathbf{y}_{m_1}(n)\,\,\mathbf{y}_{m_2}(n)\,\,\ldots\,\,\mathbf{y}_{m_{\left|S\right|}}(n)\right]^{\top}\in\mathbb{R}^{\left|S\right|}$ over time $n$, where $\left|S\right|$ is the cardinality of the subset~$S$ (see Fig.\ref{fig:tomography}).

\textit{Notation}: $S=\left\{m_1,m_2,\ldots,m_{\left|S\right|}\right\}\subset \left\{1,2,\ldots,N\right\}$ is a nonempty subset of indexes with $m_1<m_2<\ldots< m_{\left|S\right|}$ and $\left|S\right|\leq N$; given a vector $\mathbf{y}\in\mathbb{R}^{N}$,~$\left[\mathbf{y}\right]_{S}=\left[\mathbf{y}_{m_1}(n)\,\,\mathbf{y}_{m_2}(n)\,\,\ldots\,\,\mathbf{y}_{m_{\left|S\right|}}(n)\right]^{\top}$ is the subvector obtained from $\mathbf{y}$ and indexed by $S$; accordingly, a similar notation is adopted for matrices, namely, given $A\in\mathbb{R}^{N\times N}$, the matrix $A_{S}\in \mathbb{R}^{\left|S\right|\times \left|S\right|}$ or $\left[A\right]_{S}\in \mathbb{R}^{\left|S\right|\times \left|S\right|}$ is defined as the submatrix whose $ij^{\text{th}}$ entry is $A_{m_i m_j}$; ${\sf Supp}\left(A\right)$ is the support of the matrix $A$, i.e., $\left[{\sf Supp}\left(A\right)\right]_{ij}=\mathbf{1}_{\left\{A_{ij}\neq 0\right\}}$; $\left|\left|\mathbf{y}\right|\right|_{\infty}$ refers to the $L_{\infty}$-norm that returns the maximal absolute value across the entries of the vector $\mathbf{y}\in\mathbb{R}^{N}$; the set of natural numbers, including zero, is denoted by $\mathbb{N}=\left\{0,1,2,\ldots\right\}$.

\section{Structural Consistency}

Consider the following $k^{\text{th}}$ lag covariance matrix
\begin{equation}\label{eq:truecov}
R_k(n)\overset{\Delta}=\mathbb{E}\left[\mathbf{y}(n+k)\mathbf{y}(n)^{\top}\right]
\end{equation}
associated with the process~$\left(\mathbf{y}(n)\right)_{n\in\mathbb{N}}$. In addition, define the empirical counterpart of~$R_k(n)$
\begin{equation}
\widehat{R}_k(n)\overset{\Delta}=\frac{1}{n}\sum_{\ell=0}^{n-1} \mathbf{y}(\ell+k)\mathbf{y}(\ell)^{\top}.
\end{equation}

We refer to a matrix-valued estimator as any map whose input is given by the (observed) time series and the output is given by a matrix, namely,
\begin{equation}
\begin{array}{cccc}
F^{(n)}\,:\, & \mathbb{R}^{\left|S\right|\times n} & \longrightarrow & \mathbb{R}^{\left|S\right|\times \left|S\right|}\\
       & \left\{\left[\mathbf{y}(\ell)\right]_{S}\right\}_{\ell=0}^{n-1} & \longmapsto & \mathcal{F}^{(n)}
\end{array},
\end{equation}
for any given $n\in\mathbb{N}$. The idea is that the $ij^{\text{th}}$ entry of the output matrix~$\mathcal{F}^{(n)}$ estimates the strength of the link from $i$ to $j$ from $n$ samples of the observed time series. For instance, the empirical covariance matrix~$\widehat{R}_k(n)$, under full-observability, or $\left[\widehat{R}_k(n)\right]_{S}$, in the case of partial-observability, are examples of matrix-valued estimators.

\begin{definition}[structural consistency of a matrix]
A matrix-valued estimator~$F^{(n)}$ is structurally consistent with high probability, whenever there exists a threshold~$\tau$ so that,
\begin{equation}
\mathbb{P}\left(\mathcal{F}^{(n)}_{ij} > \tau\right)\overset{n\rightarrow \infty}\longrightarrow 1 \Longleftrightarrow i\rightarrow j,
\end{equation}
\noindent i.e., $i$ links to $j$ if and only if the $ij^{\text{th}}$ entry of the estimator matrix $\mathcal{F}^{(n)}$ lies above the threshold $\tau$, provided that there is a large enough number of samples~$n$.
\end{definition}

In other words, up to a proper threshold~$\tau$, the output matrix~$\mathcal{F}^{(n)}$ reflects the underlying structure of the graph in that $\left[{\sf Supp}(A_{S})\right]_{ij}=\mathbf{1}_{\left\{\mathcal{F}^{(n)}_{ij}>\tau\right\}}$, for all pairs $i\neq j$ w.h.p.

An example of a structurally consistent w.h.p. matrix-valued estimator (under partial observability) is given by~$\mathcal{F}^{(n)}\overset{\Delta}=\widehat{R}_1(n)-\widehat{R}_3(n)$~\cite{R1minusR3}. Other examples of matrix-valued estimators that are provably structurally consistent under partial observability include: $i$) \textbf{Granger} $\left[\widehat{R}_1(n)\right]_{S}\left(\left[\widehat{R}_0(n)\right]_{S}\right)^{-1}$; $ii$) \textbf{One-lag} $\left[\widehat{R}_1(n)\right]_{S}$; $iii$) \textbf{Residual} $\left[\widehat{R}_1(n)\right]_{S}- \left[\widehat{R}_0(n)\right]_{S}$. These latter estimators are proven to be structurally consistent under a certain \emph{thermodynamic} limit regime~\cite{open_Journal}, i.e., structural consistency is met in the limit $N \longrightarrow \infty$ with $\left|S\right|/N\longrightarrow \xi>0$ or with $\left|S\right|/N\longrightarrow 0$ for certain sparse regimes~\cite{tomo_journal}.
\begin{remark}
Technically, one should formally refer to the sequence~$\left(F^{(n)}\right)_{n\in\mathbb{N}}$ of maps (estimators) as structurally consistent with high probability. However, hereby, for the sake of simplicity it will be simply referred to as ``the estimator $F^{(n)}$ is structurally consistent w.h.p.".
\end{remark}
Next, we introduce a tensor-valued estimator which is, formally, any map whose input is given by the (observed) time series and the output is an order-$3$ tensor, as follows
\begin{equation}
\begin{array}{cccc}
T^{(n)}\,:\, & \mathbb{R}^{\left|S\right|\times n} & \longrightarrow & \mathbb{R}^{\left|S\right|\times \left|S\right| \times M}\\
       & \left\{\left[\mathbf{y}(\ell)\right]_{S}\right\}_{n=0}^{n-1} & \longmapsto & \mathcal{T}^{(n)}
\end{array},
\end{equation}
where the $ij^{\text{th}}$ entry of the order-$3$ tensor $\mathcal{T}^{(n)}$ is a vector $\mathcal{T}^{(n)}_{ij}\in\mathbb{R}^{M}$ that models a feature statistical descriptor corresponding to the pair~$ij$ in the network and that is built from $n$ samples of the time series~$\left\{\left[\mathbf{y}(\ell)\right]_{S}\right\}_{\ell=0}^{n-1}$.

\begin{figure} [!t]
\begin{center}
\includegraphics[scale= 0.25]{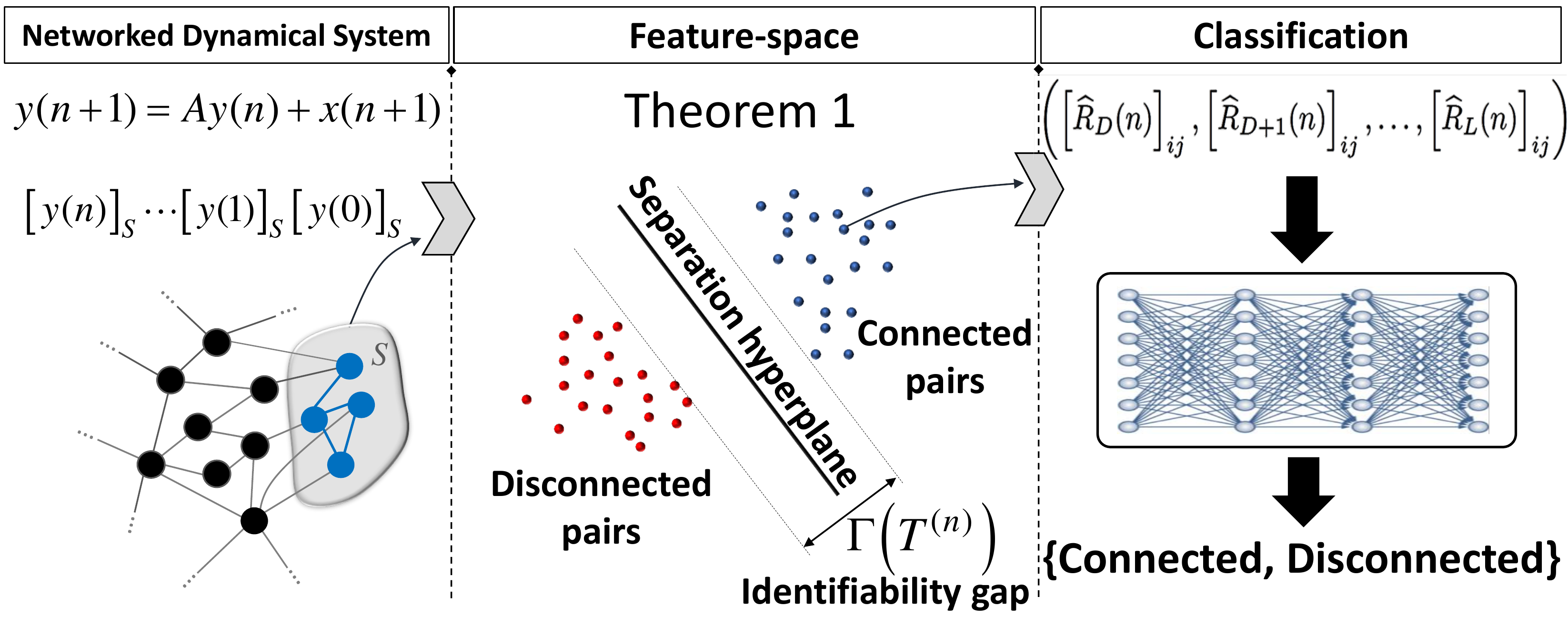}
\caption{Proposed framework.}
\label{fig:scheme}
\end{center}
\end{figure}

\begin{definition}[structural consistency of a tensor]
A tensor-valued estimator~$T^{(n)}$ of order-$3$ is linearly structurally consistent with high probability, if there exists an affine map $\mathcal{L}\,:\,\mathbb{R}^{M}\rightarrow \mathbb{R}$ (or hyperplane) that separates the underlying features associated with connected pairs from those associated with disconnected pairs w.h.p., that is,
\begin{equation}
\begin{array}{cccl}
\mathbb{P}\left(\mathcal{L}(\mathcal{T}^{(n)}_{ij})  >  0\right) & \overset{n\rightarrow \infty}\longrightarrow & 1, & \mbox{ if } ij \mbox{ is connected,}\\
& & &\\
\mathbb{P}\left(\mathcal{L}(\mathcal{T}^{(n)}_{ij})  \leq  0\right) & \overset{n\rightarrow \infty}\longrightarrow & 1, & \mbox{ if } ij \mbox{ is disconnected}
\end{array}.
\end{equation}
\end{definition}

As an example, the estimator $T^{(n)}$ whose $ij^{\text{th}}$ entry of the tensor output $\mathcal{T}^{(n)}$ is defined as
\begin{equation} \mathcal{T}^{(n)}_{ij}\overset{\Delta}=\left(\left[\widehat{R}_D(n)\right]_{ij},\left[\widehat{R}_{D+1}(n)\right]_{ij},\ldots,\left[\widehat{R}_L(n)\right]_{ij}\right)\nonumber
\end{equation}
corresponds to an order-$3$ tensor-valued estimator. As we will show in the next section, if~$D\leq 1$ and~$L\geq 3$, then this estimator is linearly structurally consistent w.h.p.

\section{Features Separability \& Explainability}

This section provides the explainability results underlying the ML approach for graph learning considered in this work.

\begin{assumption} \label{assumption_1}
Let $\mathcal{E}^{(n)}:=\left\{E^{(n)}_1,E^{(n)}_{2},\ldots,E^{(n)}_{M}\right\}$ be a family of matrix-valued estimators such that for some $\mathbf{w}:=\left(w_1,w_2,\ldots,w_M\right)\in\mathbb{R}^{M}$ with $\mathbf{w}\neq 0$, the linear combination $E^{(n)}(\mathbf{w})=\sum_{\ell=1}^{M} w_{\ell} E^{(n)}_{\ell}$ is a structurally consistent w.h.p. matrix-valued estimator for the dynamics~\eqref{eq:model}.
\end{assumption}

\begin{figure*}[!t]
	\centering
	\includegraphics[width=0.99\linewidth]{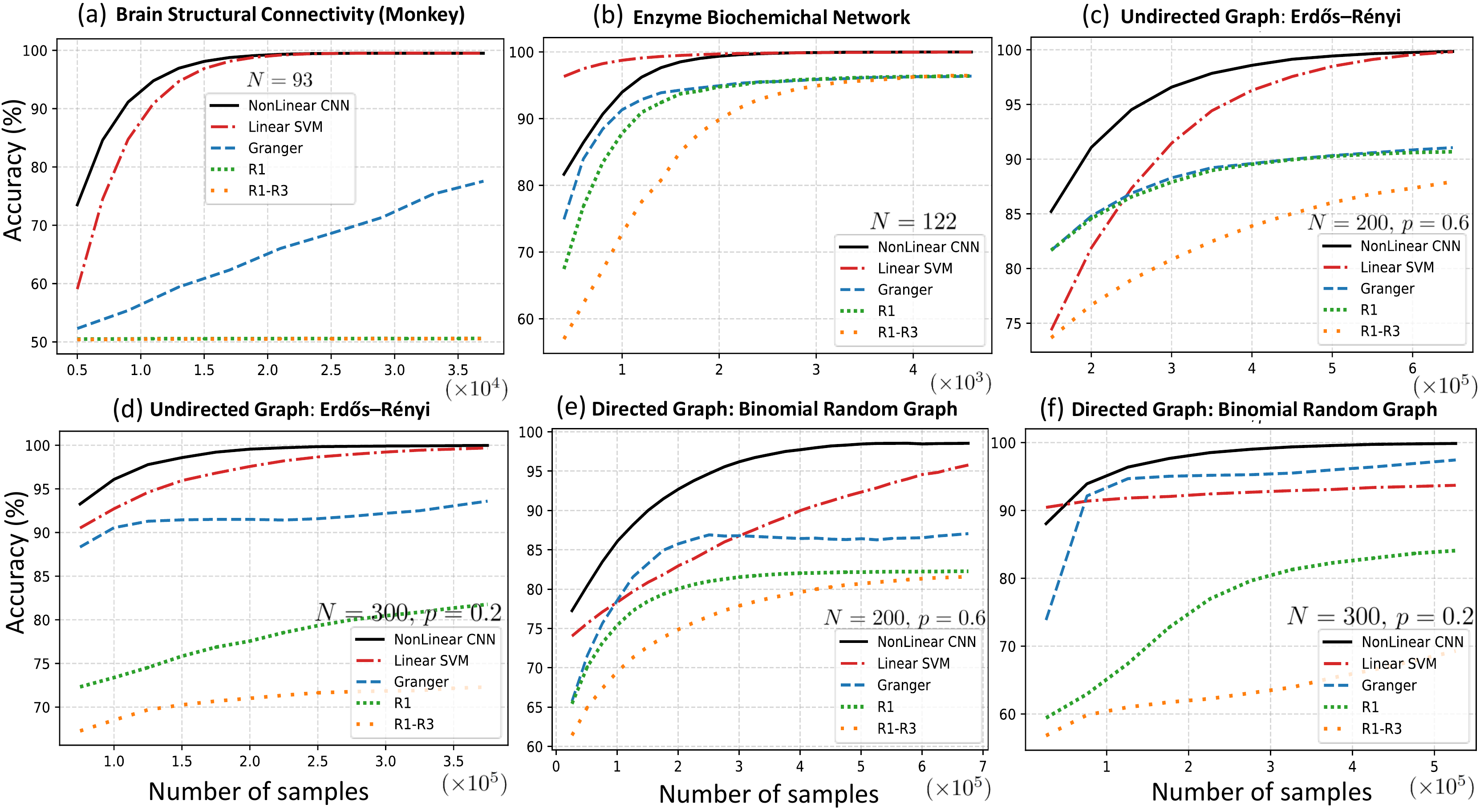}
	\caption{(a)-(f) Structure estimation performance: we plot the estimators' accuracy as a function of the number of samples. Plots (a)-(b) refer to real-world networks; (c)-(d) refer to undirected graphs (realization of an Erdős–Rényi model); and plots (e)-(f) refer to directed graphs (realization of a binomial random graph). We assume that we can only observe $\left|S\right|=20$ nodes.}
	\label{fig:Performance}
\end{figure*}

\begin{lemma}\label{lem:sep_linear}
For each pair $ij$, with $i\neq j$, define the associated feature vector as,
\begin{equation}
\mathcal{T}^{(n)}_{ij}:=\left(\left[E^{(n)}_1\right]_{ij},\left[E^{(n)}_2\right]_{ij},\ldots,\left[E^{(n)}_{M}\right]_{ij}\right)\in \mathbb{R}^{M}.
\end{equation}
Then, under Assumption~\ref{assumption_1}, the tensor-valued estimator $T^{(n)}$ is linearly structurally consistent w.h.p., or equivalently, the set of features $\left\{\mathcal{T}^{(n)}_{ij}\right\}_{i\neq j}\subset \mathbb{R}^{M}$ is consistently linearly separable w.h.p.
\end{lemma}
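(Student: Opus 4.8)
The plan is to construct the separating hyperplane directly from the ingredients supplied by Assumption~\ref{assumption_1}, so that the result becomes an unpacking of the two notions of structural consistency rather than a fresh estimation argument. The starting observation is that the feature vector $\mathcal{T}^{(n)}_{ij}$ stacks exactly the $ij^{\text{th}}$ entries of the family $E^{(n)}_1,\ldots,E^{(n)}_M$. Consequently, forming the inner product of $\mathcal{T}^{(n)}_{ij}$ with the weight vector $\mathbf{w}$ furnished by the assumption reproduces the $ij^{\text{th}}$ entry of the structurally consistent linear combination:
\begin{equation}
\mathbf{w}^{\top}\mathcal{T}^{(n)}_{ij}=\sum_{\ell=1}^{M} w_{\ell}\left[E^{(n)}_{\ell}\right]_{ij}=\left[E^{(n)}(\mathbf{w})\right]_{ij}. \nonumber
\end{equation}
This identity is the crux of the argument: it recasts the scalar structural-consistency statement for $E^{(n)}(\mathbf{w})$ as a statement about a single fixed linear functional evaluated at the features.

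Next, I would invoke the structural consistency of $E^{(n)}(\mathbf{w})$ to obtain a threshold $\tau$ such that $\left[E^{(n)}(\mathbf{w})\right]_{ij}$ exceeds $\tau$ w.h.p. for connected pairs and lies at or below $\tau$ w.h.p. for disconnected pairs. Using this $\tau$ together with $\mathbf{w}$, I would then propose the affine map $\mathcal{L}(\mathbf{z})\overset{\Delta}=\mathbf{w}^{\top}\mathbf{z}-\tau$, which defines a genuine hyperplane precisely because $\mathbf{w}\neq 0$.

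The final step is to transfer the two probability limits through the identity above. Since $\mathcal{L}(\mathcal{T}^{(n)}_{ij})>0$ is equivalent to $\left[E^{(n)}(\mathbf{w})\right]_{ij}>\tau$, and $\mathcal{L}(\mathcal{T}^{(n)}_{ij})\leq 0$ is equivalent to $\left[E^{(n)}(\mathbf{w})\right]_{ij}\leq\tau$, substituting these equivalences into the structural-consistency limits for $E^{(n)}(\mathbf{w})$ yields exactly the two convergence statements demanded by the definition of linear structural consistency, and hence the claimed consistent linear separability of $\left\{\mathcal{T}^{(n)}_{ij}\right\}_{i\neq j}$.

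I do not anticipate a substantive obstacle: once the inner-product identity is noticed, the separating hyperplane is inherited verbatim from $\mathbf{w}$ and $\tau$, so no concentration or spectral estimates beyond those already guaranteed by Assumption~\ref{assumption_1} are required. The only points needing minor care are verifying that $\mathbf{w}\neq 0$ makes $\mathcal{L}$ a bona fide affine map (ensured by the assumption) and matching the convention that a ``connected'' pair in the tensor definition corresponds to ``$i\rightarrow j$'' in the matrix definition, so that the thresholds align.
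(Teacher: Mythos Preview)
Your proposal is correct and follows essentially the same approach as the paper: both construct the separating affine map $\mathcal{L}(\mathbf{z})=\mathbf{w}^{\top}\mathbf{z}-\tau$ directly from the weight vector $\mathbf{w}$ and threshold $\tau$ supplied by Assumption~\ref{assumption_1}, and both rely on the identity $\mathbf{w}^{\top}\mathcal{T}^{(n)}_{ij}=\left[E^{(n)}(\mathbf{w})\right]_{ij}$ to transfer the structural-consistency threshold into the feature space.
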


\begin{proof}
Since $E^{(n)}(\mathbf{w})=\sum_{\ell=1}^{M} w_{\ell} E^{(n)}_{\ell}$ is structurally consistent w.h.p. for some $\mathbf{w}\in\mathbb{R}^M$, then there exists a threshold $\tau_{\mathbf{w}}$ so that $\left[E^{(n)}(\mathbf{w})\right]_{ij}> \tau_{\mathbf{w}}$ across connected pairs $ij$ and $\left[E^{(n)}(\mathbf{w})\right]_{ij}< \tau_{\mathbf{w}}$, otherwise. Therefore, the affine map~$\mathcal{L}_{\mathbf{w}}(\mathbf{x})=\mathbf{x}\cdot \mathbf{w}-\tau_{\mathbf{w}}$ consistently separates the set of features~$\left\{\mathcal{T}^{(n)}_{ij}\right\}_{ij}$ w.h.p. Indeed,
\begin{equation}\label{eq:linearmap1}
\mathcal{L}_{\mathbf{w}}(\mathcal{T}^{(n)}_{ij}) =  \mathcal{T}^{(n)}_{ij}\cdot \mathbf{w}-\tau_{\mathbf{w}}=\left[E(\mathbf{w})\right]_{ij}-\tau_{\mathbf{w}}>0
\end{equation}
for a connected pair $ij$ or
\begin{equation}
\mathcal{L}_{\mathbf{w}}(\mathcal{T}^{(n)}_{ij})=\left[E^{(n)}(\mathbf{w})\right]_{ij}-\tau_{\mathbf{w}} < 0,
\end{equation}
otherwise. In other words, the hyperplane characterized by the linear map $\mathcal{L}_{\mathbf{w}}\,:\,\mathbb{R}^{M}\longrightarrow \mathbb{R}$ separates consistently the pairs $ij$ for all $i\neq j$, w.h.p.
\end{proof}

\begin{theorem} \label{Theorem_1}
For each pair $ij$, with $i\neq j$, define the associated feature vector as,
\begin{equation}
\mathcal{T}_{ij}^{(n)}:=\left(\left[\widehat{R}_D(n)\right]_{ij},\left[\widehat{R}_{D+1}(n)\right]_{ij},\ldots,\left[\widehat{R}_{L}(n)\right]_{ij}\right),\nonumber
\end{equation}
with $D\leq 1$ and $L \geq 3$, and assume that the interaction matrix~$A$ underlying the dynamics~\eqref{eq:model} is symmetric and the covariance matrix of the noise process $\left(\mathbf{x}(n)\right)_{n\in\mathbb{N}}$ is given by $\Sigma_x:=\sigma^2 I_N$, for some $\sigma>0$. Then, the set $\left\{\mathcal{T}_{ij}^{(n)}\right\}_{i\neq j}\subset \mathbb{R}^{M}$ is consistently linearly separable w.h.p.
\end{theorem}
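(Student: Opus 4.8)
The plan is to reduce the theorem to Lemma~\ref{lem:sep_linear} by exhibiting a concrete family of matrix-valued estimators satisfying Assumption~\ref{assumption_1}. Specifically, I would take the family $\mathcal{E}^{(n)}=\left\{\widehat{R}_D(n),\widehat{R}_{D+1}(n),\ldots,\widehat{R}_L(n)\right\}$ of empirical lag-covariance matrices, which is exactly the family whose $ij$-entries are stacked to form the feature vector $\mathcal{T}^{(n)}_{ij}$. With this identification, the tensor-valued estimator $T^{(n)}$ in the theorem coincides with the one built in Lemma~\ref{lem:sep_linear}, so it suffices to verify the hypothesis of that lemma, i.e., that some nonzero linear combination $\sum_{\ell} w_{\ell}\widehat{R}_{\ell}(n)$ is structurally consistent w.h.p.

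The key step is to invoke Theorem~$1$ of~\cite{R1minusR3}, which, under precisely the standing hypotheses of the present theorem --- symmetric interaction matrix $A$ and homogeneous diagonal noise $\Sigma_x=\sigma^2 I_N$ --- asserts that the matrix-valued estimator $\widehat{R}_1(n)-\widehat{R}_3(n)$ is structurally consistent w.h.p. Here I would emphasize that the index constraints $D\leq 1$ and $L\geq 3$ are exactly what guarantee that both the lag-$1$ and the lag-$3$ covariance matrices appear in the family $\mathcal{E}^{(n)}$. Consequently, the weight vector $\mathbf{w}$ with $w_1=1$, $w_3=-1$, and all remaining entries equal to zero is a well-defined, nonzero element of $\mathbb{R}^M$ for which $E^{(n)}(\mathbf{w})=\widehat{R}_1(n)-\widehat{R}_3(n)$.

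With this choice of $\mathbf{w}$, Assumption~\ref{assumption_1} holds for the family $\mathcal{E}^{(n)}$, and applying Lemma~\ref{lem:sep_linear} directly yields that the set $\left\{\mathcal{T}^{(n)}_{ij}\right\}_{i\neq j}$ is consistently linearly separable w.h.p., with the separating hyperplane given explicitly by $\mathcal{L}_{\mathbf{w}}(\mathbf{x})=\mathbf{x}\cdot\mathbf{w}-\tau_{\mathbf{w}}$. The bulk of the technical difficulty is therefore outsourced to the cited structural-consistency result; the only genuine obstacle in this proof is the bookkeeping of checking that the prescribed lag range contains the two lags $1$ and $3$ used by that result, which the hypotheses $D\leq 1\leq 3\leq L$ handle immediately. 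I would expect no additional estimation or concentration arguments to be needed, since the linear separability is inherited verbatim from the scalar structural consistency of the linear combination.
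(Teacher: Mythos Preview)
Your proposal is correct and mirrors the paper's own proof essentially verbatim: pick $\mathbf{w}\in\{-1,0,1\}^{M}$ so that $E^{(n)}(\mathbf{w})=\widehat{R}_1(n)-\widehat{R}_3(n)$ (possible since $D\leq 1\leq 3\leq L$), invoke Theorem~1 of~\cite{R1minusR3} for structural consistency, and conclude via Lemma~\ref{lem:sep_linear}. The only cosmetic point is that the coordinates of $\mathbf{w}$ are indexed by position in the lag range rather than by the lag value itself, so strictly speaking the nonzero entries sit at positions $2-D$ and $4-D$; your intent is clear regardless.
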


\begin{proof}
Define the vector~$\mathbf{w}\in \left\{-1,0,1\right\}^{M}$ so that~$E^{(n)}(\mathbf{w})=\widehat{R}_1(n)-\widehat{R}_3(n)$, which is possible since $D\leq 1$ and $L\geq 3$. According to Theorem~\ref{Theorem_1} in~\cite{R1minusR3},~$E^{(n)}(\mathbf{w})=\widehat{R}_1(n)-\widehat{R}_3(n)$ is structurally consistent w.h.p. and the result now follows from the previous Lemma~\ref{lem:sep_linear}.
\end{proof}

\begin{remark}[Locality of the structural estimation] \label{remark_2}
Note that, to compute the feature~$\mathcal{T}^{(n)}_{ij}$ associated with each pair $ij$ defined in Theorem~$1$, we need only the time series~$\left\{y_i(\ell),y_j(\ell)\right\}_{\ell=0}^{n}$ associated with the pair $ij$ as
\begin{equation}
\mathcal{T}_{ij}^{(n)}:=\frac{1}{n}\sum_{\ell=0}^{n-1}\left(y_i(\ell+D)y_j(\ell),\ldots,y_i(\ell+M)y_j(\ell)\right),\nonumber
\end{equation}
\noindent which only involves information related to nodes $i$ and $j$. As such, it is possible to reconstruct the connectivity pattern in a pairwise manner. This is a special property that results from the fact that each lag-moment, or covariance matrix, in the feature vector can be locally estimated. Observe that the majority of the matrix-valued estimators does not exhibit this locality property. For example, to reconstruct the $ij^{\text{th}}$ entry of the Precision matrix $\left(\widehat{R}_0(n)\right)^{-1}$, one needs to know the whole matrix $\widehat{R}_0(n)$ (or a large portion around the pair $ij$ thereof). This has the drawback of implying the observation of a large set of nodes (or of the whole network) just to estimate the corresponding entry $ij$ of the Precision matrix.
\end{remark}

Now, given a matrix-valued estimator~$F^{(n)}$, define its \emph{identifiability gap} as~\cite{open_Journal}
\begin{equation}
\Gamma\left(F^{(n)}\right)\overset{\Delta}=\min_{ij\,:\,A_{ij}\neq 0} \mathcal{F}^{(n)}_{ij}- \max_{ij\,:\,A_{ij}=0} \mathcal{F}_{ij}^{(n)},
\end{equation}
i.e., the \emph{gap} between the smallest entry of $\mathcal{F}_{ij}^{(n)}$ across connected pairs and the largest entry of $\mathcal{F}^{(n)}_{ij}$ over disconnected pairs. An estimator~$F^{(n)}$ is structurally consistent w.h.p. if and only if $\Gamma\left(F^{(n)}\right)>0$ w.h.p., or in other words, if and only if connected pairs are separated from disconnected pairs, in view of the entries of the matrix $\mathcal{F}^{(n)}$, for $n$ large enough. This statistical metric is a relevant parameter regarding the \emph{hardness} of the classification. The larger the identifiability gap, the \emph{easier} the classification via thresholding of the entries of the matrix $\mathcal{F}^{(n)}$ tends to be.

Similarly, define the identifiability gap $\Gamma\left(T^{(n)}\right)$ associated with a tensor-valued estimator $T^{(n)}$ as the maximum distance among all parallel hyperplanes that consistently separate the features, as in Fig.~\ref{fig:scheme}. For example, the SVM algorithm is designed to find these margins. More concretely,
\begin{equation}
\Gamma\left(T^{(n)}\right) \overset{\Delta}= \max\limits_{\left(\mathbf{w},\tau_1\right),\left(\mathbf{w},\tau_2\right)\in \mathcal{H}} \frac{\left|\tau_1-\tau_2\right|}{\left|\left|\mathbf{w}\right|\right|},
\end{equation}
where~$\mathcal{H}$ indexes the set of linear maps that consistently separate the features: $\left(\mathbf{w},\tau\right)\in\mathcal{H}$ if and only if the linear map $\mathcal{L}_{\mathbf{w},\tau}(\mathbf{x}):=\mathbf{w}\cdot \mathbf{x}-\tau$ consistently separates the features.

\begin{lemma}\label{lem:stacking}
Let $T^{(n)}$ be a tensor-valued estimator whose underlying features at each pair $ij$ are defined as
\begin{equation}
\mathcal{T}^{(n)}_{ij}:=\left(\left[E^{(n)}_1\right]_{ij},\left[E^{(n)}_2\right]_{ij},\ldots,\left[E^{(n)}_{M}\right]_{ij}\right)\in \mathbb{R}^{M},
\end{equation}
with identifiability gap~$\Gamma_E^{(n)}\overset{\Delta}=\Gamma\left(T^{(n)}\right)$. Let~$\widehat{A}^{(n)}$ be a matrix-valued estimator with identifiability gap~$\Gamma_{A}^{(n)}\overset{\Delta}=\Gamma\left(\widehat{A}^{(n)}\right)$. If both $\widehat{A}^{(n)}$ and $T^{(n)}$ are (linearly) structurally consistent w.h.p., then the tensor-valued estimator $\widetilde{T}^{(n)}$ defined via the \emph{augmented} features
\begin{equation}
\widetilde{\mathcal{T}}^{(n)}_{ij}:=\left(\left[\widehat{A}^{(n)}\right]_{ij},\left[E^{(n)}_1\right]_{ij},\ldots,\left[E^{(n)}_{M}\right]_{ij}\right)\in \mathbb{R}^{M},
\end{equation}
exhibits an identifiability gap obeying $\Gamma\left(\widetilde{T}^{(n)}\right)\geq \left|\left|\Gamma^{(n)}\right|\right|_{2}$ w.h.p., with $\Gamma^{(n)}\overset{\Delta}=\left(\Gamma_{A}^{(n)},\Gamma_{E}^{(n)}\right)$.
\end{lemma}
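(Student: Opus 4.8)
The plan is to reduce the statement to a purely geometric fact about maximum–margin separation, exploiting that stacking $[\widehat{A}^{(n)}]_{ij}$ onto the feature $\mathcal{T}^{(n)}_{ij}$ places the two separating directions in \emph{orthogonal} coordinate blocks of the augmented space $\mathbb{R}^{M+1}$. First I would rewrite the tensor identifiability gap in its variational form: for a fixed normal $\mathbf{w}$, the admissible thresholds $\tau$ with $(\mathbf{w},\tau)\in\mathcal{H}$ form an interval whose length is exactly the \emph{projection gap} $g(\mathbf{w}):=\min_{ij\text{ conn}}\mathbf{w}\cdot\mathcal{T}^{(n)}_{ij}-\max_{ij\text{ disc}}\mathbf{w}\cdot\mathcal{T}^{(n)}_{ij}$, so that $\Gamma(T^{(n)})=\max_{\mathbf{w}\ne 0} g(\mathbf{w})/\|\mathbf{w}\|$. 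Under the hypotheses, I would work on the high-probability event where both $\widehat{A}^{(n)}$ and $T^{(n)}$ separate, i.e.\ $\Gamma_A^{(n)}>0$ and $\Gamma_E^{(n)}>0$; let $\mathbf{w}^{\ast}\in\mathbb{R}^M$ with $\|\mathbf{w}^{\ast}\|=1$ be a direction attaining $\Gamma_E^{(n)}$. Observe that the matrix gap is the same projection gap evaluated in the one–dimensional $\widehat{A}$–coordinate: projecting $\widetilde{\mathcal{T}}^{(n)}_{ij}$ onto the first axis returns $[\widehat{A}^{(n)}]_{ij}$, whose projection gap is precisely $\Gamma_A^{(n)}$.

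Next I would exhibit a single good separating direction in the augmented space and lower-bound $\Gamma(\widetilde{T}^{(n)})$ by the margin it yields. Take $\mathbf{v}=(\alpha,\beta\mathbf{w}^{\ast})\in\mathbb{R}^{M+1}$ with $\alpha,\beta\ge 0$, so that $\mathbf{v}\cdot\widetilde{\mathcal{T}}^{(n)}_{ij}=\alpha[\widehat{A}^{(n)}]_{ij}+\beta\,\mathbf{w}^{\ast}\cdot\mathcal{T}^{(n)}_{ij}$. Using superadditivity of the minimum and subadditivity of the maximum ($\min(f+g)\ge\min f+\min g$, $\max(f+g)\le\max f+\max g$ taken over the connected, resp.\ disconnected, pairs), the projection gap of $\mathbf{v}$ satisfies $g(\mathbf{v})\ge\alpha\,\Gamma_A^{(n)}+\beta\,\Gamma_E^{(n)}$, which is strictly positive; hence $\mathbf{v}$ is a genuine separator and a valid threshold interval exists, so $(\mathbf{v},\tau_1),(\mathbf{v},\tau_2)\in\mathcal{H}$ for suitable $\tau_1,\tau_2$. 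Because the $\widehat{A}$–coordinate is orthogonal to the remaining $M$ coordinates, $\|\mathbf{v}\|=\sqrt{\alpha^2+\beta^2}$, and the margin realized by $\mathbf{v}$ is at least $(\alpha\,\Gamma_A^{(n)}+\beta\,\Gamma_E^{(n)})/\sqrt{\alpha^2+\beta^2}$.

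Finally I would optimize over the free weights. Since $\Gamma(\widetilde{T}^{(n)})$ is the maximum margin over \emph{all} normals, it dominates the bound above for every admissible $(\alpha,\beta)$; maximizing $(\alpha\,\Gamma_A^{(n)}+\beta\,\Gamma_E^{(n)})/\sqrt{\alpha^2+\beta^2}$ is a direct application of Cauchy–Schwarz, attaining its maximum $\sqrt{(\Gamma_A^{(n)})^2+(\Gamma_E^{(n)})^2}=\|\Gamma^{(n)}\|_2$ at $(\alpha,\beta)=(\Gamma_A^{(n)},\Gamma_E^{(n)})$. This yields $\Gamma(\widetilde{T}^{(n)})\ge\|\Gamma^{(n)}\|_2$ on the said event, i.e.\ w.h.p., which is the claim.

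I expect the main obstacle to be conceptual rather than computational: recognizing that the correct trial normal is the gap-weighted combination $\mathbf{v}=(\Gamma_A^{(n)},\Gamma_E^{(n)}\mathbf{w}^{\ast})$, and that the Pythagorean ($L_2$) form of the bound is forced precisely by the orthogonality of the stacked coordinate blocks. A secondary point to handle with care is that the projection-gap estimate is only an inequality (min-of-sum versus sum-of-mins), so one must argue that a single well-chosen direction already suffices for the desired \emph{lower} bound on the max-margin, rather than attempting an exact evaluation of $g(\mathbf{v})$; the sign conventions $\alpha,\beta\ge 0$ and the w.h.p.\ positivity of both gaps then close the argument.
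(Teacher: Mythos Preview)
Your argument is correct and proves the lemma, but it follows a different route from the paper. The paper works on the \emph{dual} side: it invokes the characterization that the identifiability gap equals the Euclidean distance between the convex hulls of the connected and disconnected feature clouds (citing an external result), then observes that the augmented clouds sit inside Cartesian products $\mathcal{C}\times[R,\infty)$ and $\mathcal{D}\times(-\infty,r]$, commutes convex hull with product, and reads off the bound from the Pythagorean decomposition of the distance between such products. Your argument stays entirely on the \emph{primal} side: you fix an optimal unit normal $\mathbf{w}^{\ast}$ for $T^{(n)}$, build the trial direction $\mathbf{v}=(\alpha,\beta\mathbf{w}^{\ast})$ in the augmented space, lower-bound the projection gap via $\min(f+g)\ge\min f+\min g$ and $\max(f+g)\le\max f+\max g$, and then optimize $(\alpha\Gamma_A^{(n)}+\beta\Gamma_E^{(n)})/\sqrt{\alpha^2+\beta^2}$ by Cauchy--Schwarz. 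The two approaches are essentially dual to each other; yours has the advantage of being self-contained (no need for the convex-hull distance duality theorem) and of making transparent which single hyperplane certifies the bound, while the paper's geometric picture makes the Pythagorean origin of the $\ell_2$-combination visually immediate.
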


Lemma~\ref{lem:stacking} asserts that, if further matrix-valued structurally consistent estimators are incorporated into the feature vector, the identifiability gap increases.

\begin{proof}
Let ${\sf Cv}\left(\mathcal{S}\right)$ denote the \emph{convex hull} of a set $\mathcal{S}\subset \mathbb{R}^M$, i.e., the smallest convex set containing~$\mathcal{S}$~\cite{convex}. Define $\widetilde{\mathcal{C}}\overset{\Delta}=\left\{\widetilde{\mathcal{T}}^{(n)}_{ij}\right\}_{ij\,:\,A_{ij}\neq 0}$ as the set of augmented features associated with connected pairs and $\widetilde{\mathcal{D}}\overset{\Delta}=\left\{\widetilde{\mathcal{T}}^{(n)}_{ij}\right\}_{ij\,:\,A_{ij}=0}$ associated with disconnected pairs. Similarly, define $\mathcal{C}\overset{\Delta}=\left\{\mathcal{T}^{(n)}_{ij}\right\}_{ij\,:\,A_{ij}\neq 0}$ and $\mathcal{D}\overset{\Delta}=\left\{\mathcal{T}^{(n)}_{ij}\right\}_{ij\,:\,A_{ij}=0}$. Let $R$ be the smallest entry of $\widehat{A}^{(n)}$ across connected pairs and $r$ be the greatest entry of $\widehat{A}^{(n)}$ across disconnected pairs and note that $r<R$ w.h.p, since $\widehat{A}^{(n)}$ is structurally consistent w.h.p. We have that
\begin{equation}
\begin{array}{ccl}
\Gamma\left(\widetilde{T}^{(n)}\right)^2 \!\!\!\!& \overset{(a)}= & \!\!\!\! {\sf d}\left({\sf Cv}\left(\widetilde{\mathcal{C}}\right),{\sf Cv}\left(\widetilde{\mathcal{D}}\right)\right)^2\\
\!\!\!\!& \overset{(b)}\geq & \!\!\!\! {\sf d}\left({\sf Cv}\left(\mathcal{C}\times \!\!\left.\left[R,\infty\right.\right)\right),{\sf Cv}\left(\mathcal{D}\times \!\! \left.\left(-\infty,r\right.\right]\right)\right)^2\\
\!\!\!\!& \overset{(c)}\geq & \!\!\!\! {\sf d}\left({\sf Cv}\left(\mathcal{C}\right)\times \!\! \left.\left[R,\infty\right.\right),{\sf Cv}\left(\mathcal{D}\right)\times \!\! \left.\left(-\infty,r\right.\right]\right)^2\\
\!\!\!\!& \overset{(d)}= & \!\!\!\! {\sf d}\left({\sf Cv}\left(\mathcal{C}\right),{\sf Cv}\left(\mathcal{D}\right)\right)^2+(R-r)^2\\
\!\!\!\!& = & \!\!\!\!\left(\Gamma^{(n)}_E\right)^2+\left(\Gamma^{(n)}_{A}\right)^2=\left|\left|\Gamma^{(n)}\right|\right|_2^2
\end{array}\nonumber
\end{equation}
where for two subsets $\mathcal{X},\mathcal{Y}\subset{R}^{M}$,  $d\left(\mathcal{X},\mathcal{Y}\right)$ is the distance
\begin{equation}
{\sf d}\left(\mathcal{X},\mathcal{Y}\right)\overset{\Delta}=\inf_{x\in \mathcal{X},y\in \mathcal{Y}}\left|\left|x-y\right|\right|_2;
\end{equation}
the first identity~$(a)$ conforms to an alternative dual characterization for the identifiability gap (refer to Theorem~$13$ in~\cite{DistanceConvex});~$(b)$ holds in view of the inclusions $\widetilde{\mathcal{C}}\subset \mathcal{C}\times \left.\left[R,\infty\right.\right)$ and $\widetilde{\mathcal{D}}\subset \mathcal{D}\times \left.\left(-\infty,r\right.\right]$; $(c)$ holds since $\mathcal{C}\subset {\sf Cv}\left(\mathcal{C}\right)$ and $\mathcal{D}\subset {\sf Cv}\left(\mathcal{D}\right)$ and the fact that the convex hull commutes with the Cartesian product over convex sets~\cite{convex}; the identity~$(d)$ is straightforward from the definition of the distance~${\sf d}(\cdot,\cdot)$.
\end{proof}

\section{Methodology}

In order to stratify the pairs of nodes into connected or disconnected from the observed time series, we address the linear separability property of the covariance-based features~$\left\{\mathcal{T}^{(n)}_{ij}\right\}_{ij}$ established in Theorem~$1$, by studying the performance of trained classifiers, in particular, linear Support Vector Machines (SVMs) and Convolutional Neural Networks (CNNs).
The training set is given by
\begin{equation}
{\sf Tr}^{(n)}\overset{\Delta}=\left\{\left(\overline{\mathcal{T}}^{(n)}_{ij},\mathbf{1}_{\left\{A_{ij}\neq 0\right\}}\right)\right\}_{i\neq j}
\end{equation}
\noindent where we have introduced the normalized feature vectors
\begin{equation}\label{eq:norma}
\overline{\mathcal{T}}^{(n)}_{ij}:=\frac{\mathcal{T}^{(n)}_{ij}}{\max_{i\neq j}\left|\left|\mathcal{T}^{(n)}_{ij}\right|\right|_{\infty}},
\end{equation}
\noindent with the unnormalized features given by,
\begin{equation}
\mathcal{T}_{ij}^{(n)}\overset{\Delta}=\left(\left[\widehat{R}_{-100}(n)\right]_{ij},\left[\widehat{R}_{-99}(n)\right]_{ij},\ldots,\left[\widehat{R}_{100}(n)\right]_{ij}\right).\nonumber
\end{equation}
In other words, for training, we provide a normalized feature~$\overline{\mathcal{T}}^{(n)}_{ij}$ associated with the pair $ij$ as input to a classifier and the output should be the ground truth $1_{\left\{A_{ij}\neq 0\right\}}$.

The normalization in the training set is motivated by the following observation. With infinitely many samples,
\begin{equation}
\mathcal{T}^{\infty}_{ij}=\sigma^2\left(\left[\overline{R}_D\right]_{ij},\left[\overline{R}_{D+1}\right]_{ij},\ldots,\left[\overline{R}_M\right]_{ij}\right)
\end{equation}
where~$\overline{R}_{k}$ is the $k$-lag covariance matrix (equation~\eqref{eq:truecov}) of the normalized process $\left(\mathbf{y}(n)/\sigma\right)_{n\in\mathbb{N}}$, i.e., the process whose noise is normalized to unit variance. With the proposed normalization in equation~\eqref{eq:norma}, the multiplicative factor $\sigma^2$ is cancelled out, which decreases the role played by the noise-level in the performance of the trained CNNs. Furthermore, this normalization renders the generalization performance of the trained CNNs robust across structurally distinct graphs.

To generate the matrix $A$ to obtain the time series data~$\left\{\mathbf{y}(\ell)\right\}_{\ell=0}^{n}$, given a graph $G$, the following procedure was considered. Let $G$ be a given graph without self-loops, i.e., $G_{ii}=0$ for all $i$. Define the interaction matrix $A$ as
\begin{equation}
\left\{\begin{array}{ccll}
A_{ij} & = & \alpha_1 \frac{G_{ij}}{d_{\max}(G)}, & \mbox{ for } i\neq j\\
A_{ii} & = & \alpha-\sum_{k\neq i} A_{ik}, & \mbox{ for all } i
\end{array},\right.
\end{equation}
where $d_{\max}(G)$ is the maximum \emph{in-flow} degree of the underlying graph $G$ and $0<\alpha_1\leq\alpha<1$ are some constants. In other words, the rows of $A$ sum to $\alpha<1$ and its support is given by $G$. This is often cast as the \emph{Laplacian rule}~\cite{Sayed}. This interaction matrix renders the networked dynamical system~\eqref{eq:model} stable and with a support graph of interactions given by $G$. To generate the support graph $G$, we considered the realization of random graph models as Erdős–Rényi for undirected graphs, binomial random graphs for directed graphs, and real-world networks.

We train the classifiers over one realization of a random graph model with~$p=0.5$ and~$N=100$ and apply them to distinct networks, including real-world ones, where $p$ is the probability of edge or arrow drawing in the random graph model and $N$ is the number of nodes. Throughout, we assume that we can only observe the time series data from $\left|S\right|=20$ nodes, that is, we assume $S=\left\{1,2,\ldots,20\right\}$.

\section{Simulation Results}

In the numerical results considered, we define \emph{accuracy} as the number of directed pairs correctly classified over the total number of directed pairs in the underlying graph. We consider $1000$ Monte Carlo runs across all plots.

\begin{figure}
\centering
\begin{subfigure}[b]{0.45\textwidth}
   \includegraphics[width=1\linewidth]{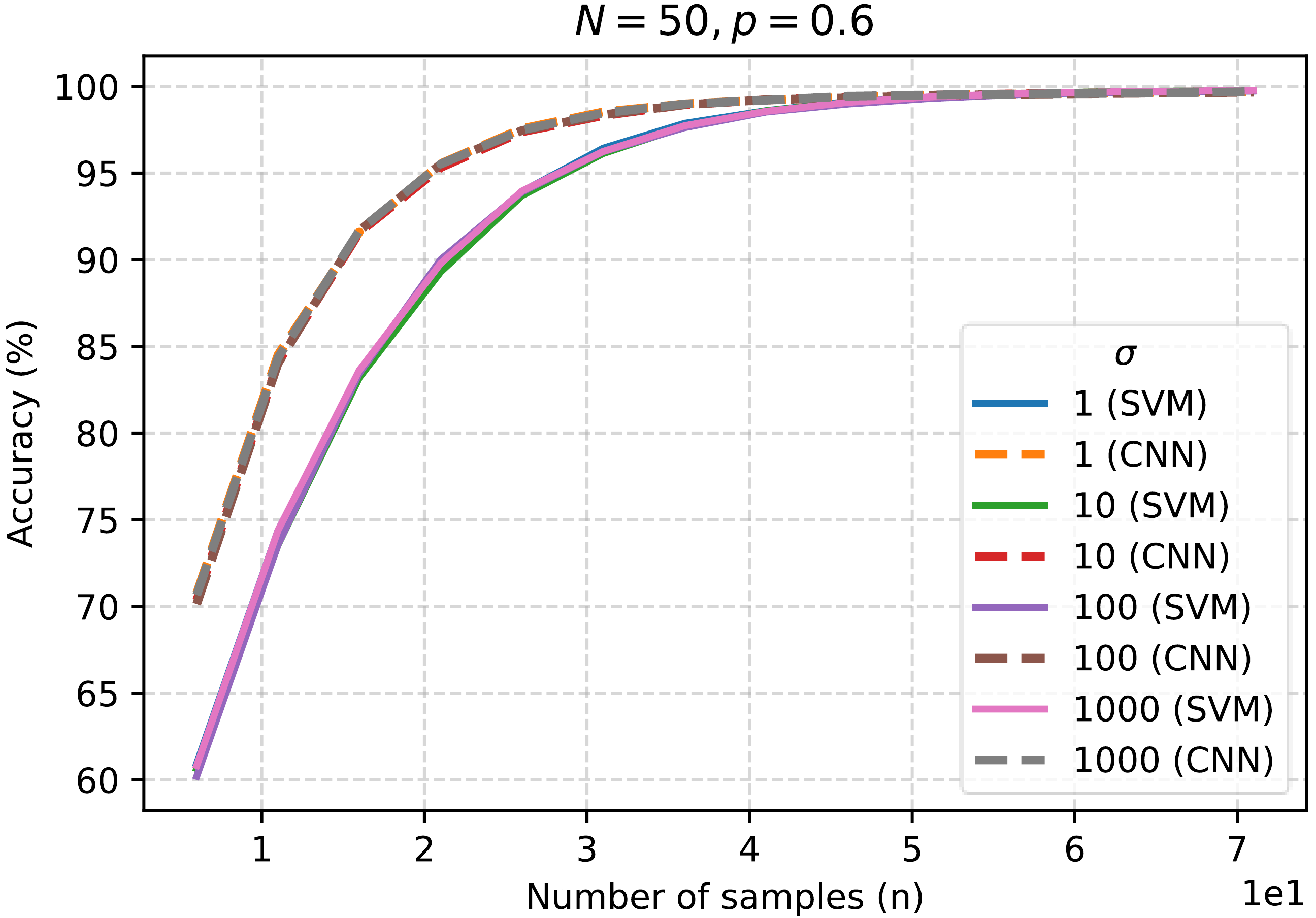}
   \caption{}
   \label{fig:Ng1}
\end{subfigure}

\begin{subfigure}[b]{0.45\textwidth}
   \includegraphics[width=1\linewidth]{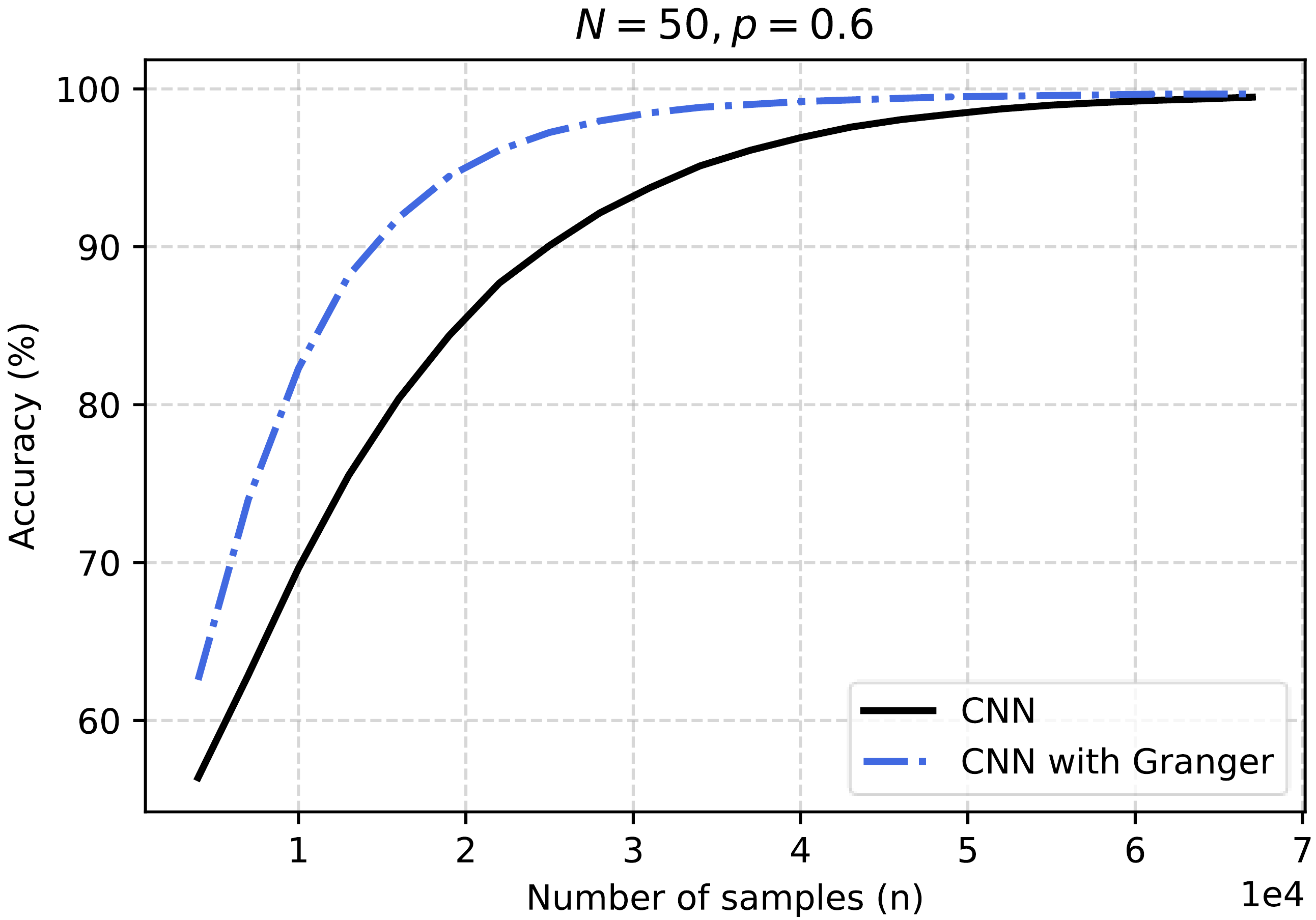}
   \caption{}
   \label{fig:Ng2}
\end{subfigure}

\caption{(a) depicts the robustness against the noise variance for both the CNN and the SVM; and (b) considers the inclusion of the Granger estimator in the feature vector. }
\end{figure}

Fig.~\ref{fig:Performance} $(a)-(f)$ depict the sample-complexity performance of the estimators across structurally distinct networks, considering: $i$) Granger under partial observability~$\left[\widehat{R}_{1}(n)\right]_{\mathcal{S}}\left(\left[\widehat{R}_0(n)\right]_{\mathcal{S}}\right)^{-1}$ that is provably structurally consistent~\cite{tomo_journal,tomo_journal_proceedings} for distinct regimes of network connectivity; $ii$) The one-lag estimator~$\widehat{R}_1(n)$, which is also consistent for several network connectivity regimes~\cite{open_Journal}; $iii$) the $\widehat{R}_1(n)-\widehat{R}_3(n)$ that is structurally consistent~\cite{R1minusR3}; $iv$) the linear SVM; and $v$) the trained CNNs. For classification, we apply Gaussian mixture over the sorted entries of the matrix-valued estimators in order to stratify the connected and disconnected pairs. Our results show the overall superiority in performance for the CNN-based classifier. Figs.~\ref{fig:Performance} $(a)-(b)$ refer to real-world networks obtained from the database~\cite{nr}, with $(a)$ for a Brain structural connectivity matrix of a monkey and $(b)$ for an enzyme biochemical network; $(c)-(d)$ refer to symmetric regimes where the underlying support graph of the networked dynamical system is undirected; and $(e)-(f)$ refer to directed graph regimes. $N$ and $p$ stand for the number of nodes and probability of edge/arrow drawing in the random graph models. It should be remarked that while the CNN is trained over a synthetic network, namely, a particular realization of an Erdős–Rényi (for undirected networks) random graph model with~$p=0.5$ and~$N=100$, it generalizes well over real-world networks as demonstrated in Figs.~\ref{fig:Performance} $(a)-(b)$.

Fig.~\ref{fig:Ng1} illustrates the robustness of both the trained CNNs and the linear SVMs against distinct noise-level regimes. The CNNs and SVMs are trained with a noise variance of $\sigma^2=0.5$, but generalize well over an extended range of noise variance. Fig.~\ref{fig:Ng1} shows that the performance of these classifiers is not sensitive to the variance of the input noise in the dynamics~\eqref{eq:model}. Fig.~\ref{fig:Ng2} shows the gain in performance when the Granger estimator is included in the feature vector. In particular, when we include in the feature vector
\begin{equation} \mathcal{T}_{ij}^{(n)}\overset{\Delta}=\left(\left[\widehat{A}_{\mathcal{S}}\right]_{ij},\left[\widehat{R}_{-100}(n)\right]_{ij},\ldots,\left[\widehat{R}_{100}(n)\right]_{ij}\right)\nonumber
\end{equation}
the additional component $\widehat{A}_{\mathcal{S}}\overset{\Delta}=\left[\widehat{R}_1(n)\right]_{\mathcal{S}}\left(\left[\widehat{R}_0(n)\right]_{\mathcal{S}}\right)^{-1}$
that is the Granger under partial observability, with only $\left|S\right|=20$ nodes observed.
This is consistent with Lemma~\ref{lem:stacking}, motivating the search for feature vectors built on other matrix-valued structurally consistent estimators. It motivates the following causal inference paradigm: $i$) characterize matrix-valued structurally consistent estimators; $ii$) define feature vectors that collect these consistent estimators; $iii$) use these new features to train classifiers like a CNN.

\section{Concluding Remarks}

This paper considered the problem of determining the graph that captures the fundamental dependencies among time series of data. These time series are indexed as nodes in linear stochastic networked dynamical systems. Only the time series of some nodes are observed (partial observability). We proposed a novel feature-based paradigm and proved that the features were consistently linearly separable. With this separability property, our features can be used as an input to a variety of machine learning pipelines in order to design new state-of-the-art algorithms for causal inference of linear networked dynamical systems. In particular, CNNs trained over this set of features exhibited remarkable sample-complexity performance, significantly reducing the number of samples required to reach a certain level of accuracy, as compared with other state-of-the-art estimators, which require a much larger number of samples. Simulation results show the superiority of the CNN-based approach. It was further shown that the inclusion of structurally consistent matrix-valued estimators in the feature vectors increases the performance of structure identification. This motivates further study of new structurally consistent matrix-valued estimators as building blocks for feature vectors or tensor-valued estimators.



\section{Acknowledgments}
The work of S. Machado, A. Santos, J. Henriques and P. Gil was funded in part by the FCT - Foundation for Science and Technology, Portugal, I.P./MCTES through national funds (PIDDAC), within the scope of CISUC
R\&D Unit - UIDB/00326/2020 or project code UIDP/00326/2020 and CTS - Centro de Tecnologia e Sistemas - UIDB/00066/2020. The work of José M. F. Moura was funded in part by the U.S. National Science Foundation under Grant CCN 1513936. The work of Anirudh Sridhar was funded by the U.S. National Science Foundation under Grants CCF-1908308 and ECCS-2039716, and a grant from the C3.ai Digital Transformation Institute. The source code for the numerical experiments can be found at https://github.com/ASanctvs/Structure-Identification.

{\small
	\bibliographystyle{aaai23}
	\bibliography{biblio}
}

\end{document}